\newtheorem{theorem}{Theorem}
\theoremstyle{plain}
\newtheorem{lemma}[theorem]{Lemma}
\newtheorem{remark}[theorem]{Remark}
\theoremstyle{definition} 
\newtheorem{definition}[theorem]{Definition}
\newcommand{\R}{\mathbb{R}}
\newcommand{\C}{\mathbb{C}}
\newcommand{\spann}{\operatorname{span}}
\renewcommand{\Re}{\operatorname{Re}}
\renewcommand{\Im}{\operatorname{Im}}
\begin{document}

\title{Rotation invariants of two dimensional curves based on iterated integrals}
\author{Joscha Diehl, TU Berlin \footnote{diehl@math.tu-berlin.de}}
\maketitle

\begin{abstract}
  We introduce a novel class of rotation invariants
  of two dimensional curves based
  on \textit{iterated integrals}.
  The invariants we present are in some sense complete and we describe an algorithm to calculate them,
  giving explicit computations up to order six.
  We present an application to online (stroke-trajectory based) character recognition.
  This seems to be the first time in the literature that the use of iterated integrals of a curve is proposed for
  (invariant) feature extraction in machine learning applications.
\end{abstract}

\section{Introduction}

Data that can be represented as a curve in two dimensional Euclidean space
appears in many areas of pattern analysis.
The boundary (or silhouette) of an object in an image or video recording
can be stored as a (closed) two dimensional curve.
This has been used for example
for object recognition of everyday objects \cite{bibMokhtarian1995},
the mapping of (aerial) photographs to terrain maps \cite{bibMokhtarianMackworth1986,bibRodriguezAggarwal}
and to solving jigsaw puzzles \cite{bibFreemanGarder1964}.
In video recordings, the path of persons in the field of vision can be analyzed 
to detect anomalous behavior \cite{bibHuEtAl2004,bibZhouEtAl2008}
or predict future moves \cite{bibBennewitzEtAl2005}.
Hand gestures also naturally describe trajectories \cite{bibPsarrou2002}.
On a larger scale, the travel path, usually of a person or vehicel, in its environment
is easily recovered from time stamped location data, given for example either from GPS recordings or derived from mobile connectivity logs.
See the monograph \cite{bibZheng2011} for an overview of applications in this context.
In online (stroke-based) character recognition, the input stream
is usually the trajectory of pen-movements,
see for example the survey \cite{bibPlamondonSrihari2000}.

In some of these applications, there is no fixed coordinate system in which to orient the data,
so it becomes important to select features of the data that are invariant to rotation of the input.
In the example of online character recognition, thinking of multiuser tablets,
the angle from which the device is used does not have to be fixed.
This fact has to be taken into account when extracting features for the recognition task.

The construction of rotation invariants of \textit{images} has a long history.
Starting with the work of Hu \cite{bibHu} they are usually based on (centered) moments of the image.
Hu drew the connection to the classical problem of algebraic invariants \cite{bibCayley1854}
(see \cite{bibOlver1999} for a modern treatment) and was able to explicitly calculate seven invariants.
In subsequent work several different methods for deriving these invariants have been proposed, among them
Zernike moments \cite{bibWallinKubler1980},
the Fourier-Mellin transform \cite{bibLi1992} 
and Lie algebra methods \cite{bibDimai1999,bibSakataEtAl2004}.
The work of \cite{bibAbuMostafaPsaltis1984,bibFlusser2000} using complex moments inspired the method that we present.

Rotation invariant feature selection
of two-dimensional \textit{curves}
has also been treated in its own right.
Among the techniques are
Fourier series (of closed curves) \cite{bibGranlund1972,bibZahnRoskies1972,bibKuhlGiardina1982},
wavelets \cite{bibChuangKuo1996},
curvature based methods \cite{bibMokhtarianMackworth1986,bibCalabiEtAl1998}
and integral invariants \cite{bibManayCremers2006}.
Let us also mention the works
on codons \cite{bibHoffmanRichards1982}
the primal curvature sketch \cite{bibAsadaBrady1986}
and on Freeman chains \cite{bibFreeman1974},
which are usually not rotation invariant.

The invariants that we present are based on
the set of iterated integrals of a curve, which is usually denoted its \textit{signature}.
The signature as an object of study was first introduced by Chen \cite{bibChen1954}
and he showed that a curve is almost completely characterized by it \cite{bibChen1958}
(see \cite{bibHamblyLyons} for a recent generalization).
The importance of iterated integrals has by now become evident in areas such as control theory \cite{bibFliess1982},
ordinary differential equations and stochastic analysis \cite{bibLyonsIbero,bibLyonStFlour,bibFV,bibBaudoin2004}

The paper is structured as follows.
In Section \ref{sec:signature} we define iterated integrals and introduce (minimal) algebraic notations to deal with them.
Section \ref{sec:rotationinvariants} containts our main results, which give a means to calculate all rotation invariants based on the signature.
In Section \ref{sec:explicit} we carry out explicit computations for invariants up to order six, taking some care of algebraic independence.
Finally in Section \ref{sec:application}, as a prove of concept, we apply them to a simple character recognition problem.

Let us briefly mention some advantages of using iterated integrals for feature extraction. 
Curvature based methods \cite{bibMokhtarianMackworth1986,bibCalabiEtAl1998}
rely on the computation of the second derivative,
whereas the calculation of iterated integrals only needs the first derivative of the signal.
The latter is, of course, a more stable procedure.
In fact the stability of iterated integrals goes far beyond this fact, since they
can be compute even for signals that are \textit{nowhere differentiable}; think for example of the path of (a realization of) Brownian motion.
Moreover there exists a good approximation theory that for example usually provides convergence of piecewiese linear approximations (see \cite{bibFV}).
Although similar stability properties are also shared by Fourier methods \cite{bibGranlund1972,bibZahnRoskies1972,bibKuhlGiardina1982},
it is known (Chapter 1 in \cite{bibLyonsQian2002}) that the Fourier series representation is not well suited for highly oscillatory signals.
Considering a curve as an image in 2D (i.e. forgetting about the order in which it is drawn),
one can apply invariants used for \textit{images}.
Again, this will fail if the signal is highly oscillatory or overlaps frequently.
Moreover, it is for example almost impossible to distiguish the letter $M$ from a rotated letter $W$
\textit{considered as images}. The corresponding curves (as they are usually drawn) are completetely different though;
the first ``turn'' on the letter $M$ is to the right, the first one on the letter $W$ is to the left.

\newcommand{\TC}{T((\R^2))}
\newcommand{\TS}{T(\R^2)}
\newcommand{\TCc}{T((\C^2))}
\newcommand{\TSc}{T(\C^2)}

\section{The signature of a curve}
\label{sec:signature}

By a curve $X$ we will from now on denote
a continuous mapping $X: [0,T] \to \R^2$ of bounded variation.
Using geometric reasoning we can immediately recognize two rotation invariants of such a curve.
The first one is the Euclidean distance $E$
between startingpoint and endpoint.
The second one is the \textit{area} $A$ swept out by the closed curve,
which is obtained by connecting starting and endpoint via a straight line.

\begin{figure}[h]
  \includegraphics[width=\textwidth]{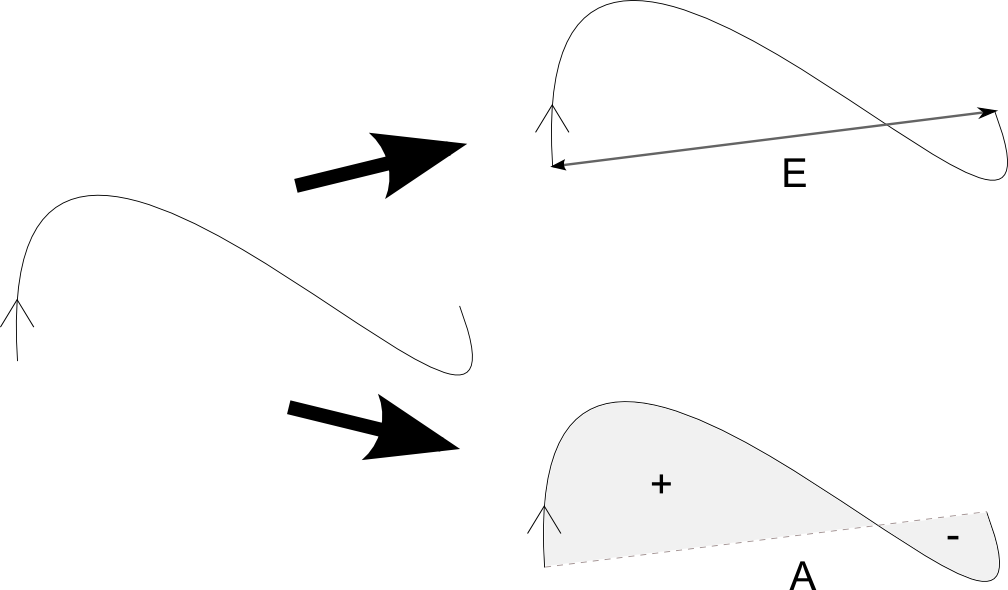}
  \caption{
    Two rotation invariants are shown.
    $E$ is the distance between starting and endpoint.
    $A$ is the area enclosed by the curve counted with orientation, as denoted by the plus and minus sign. }
\end{figure}

It turns out that both of these quantities can be written down
in terms of \textit{iterated integrals}
\footnote
{
  Since $X$ is of bounded variation
  the integrals are well-defined using classical Riemann-Stieltjes integration (see for example Chapter 6 in \cite{bibRudin1976}).
  This can be pushed much further though.
  In fact the following considerations are purely algebraic
  and hence hold for any curve for which a sensible integration theory exists.
  An example is two dimensional Brownian motion, which is almost surely 
  nowhere differentiable but nonetheless admits a
  Stratonovich integral.
}
of the curve,
\begin{align*}
  E^2 &= \frac{1}{2} \int_0^T \int_0^r dX^1_u dX^1_r + \frac{1}{2} \int_0^T \int_0^r dX^2_u dX^2_r \\
  A &= \frac{1}{2} \int_0^T \int_0^r dX^1_u dX^2_r - \frac{1}{2} \int_0^T \int_0^r dX^2_u dX^1_r.
\end{align*}
Indeed, the first equality is just an application of the integration by parts formula
(Theorem 6.22 in \cite{bibRudin1976}).
The second equality follows from Green's theorem (Theorem 10.33 in \cite{bibRudin1976}).
We thus hope to find other linear combinations of iterated integrals that also yield rotation invariants.

Let us introduce some algebraic notation in order to work with the collection of these integrals.
Denote by $\TC$, the space of formal power series in two \textit{non-commuting} variables $x_1, x_2$.
$\TC$ is the (algebraic) dual of $\TS$, the space of polyonomials in $x_1, x_2$,
where the pairing, denoted by $\langle \cdot, \cdot \rangle$ is defined by declaring all monomials to be orthonormal, e.g.
\begin{align*}
  \langle 1 + 4.3 x_1  + 7.9 x_1 x_2 - 0.2 x_1 x_2 x_1 + \cdots, x_1 x_2 \rangle = 7.9.
\end{align*}
We use the usual product of monomials, denoted by $\cdot$, extended to the whole space by bilinearity.
Note that $\cdot$ is also non-commuting.
See \cite{bibReutenauer1993} for background on these spaces.

We define the \textbf{signature} of $X$ to be
\begin{align*}
  S(X)_{0,T} := \sum x_{i_1} \dots x_{i_n} \int_0^T \int^{r_n} \dots \int_0^{r_2} dX^{i_1}_{r_1} \dots dX^{i_n}_{r_n},
\end{align*}
where the sum is taken over all $n \ge 0$ and all $i_1, \dots, i_n \in \{1,2\}$.
For $n=0$ the summand is, for algebraic reason, taken to be the constant $1$.
Note that $S(X)_{0,T}$ is an element of $\TC$.

It was proven in \cite{bibChen1958} (see \cite{bibHamblyLyons} for a generalization)
that the mapping $X \to S(X)_{0,T}$ is ``almost`` one-to-one.
In other words, a curve is completely characterized by its signature (modulo a ''tree-like`` path).

This fact should be compared to the fact
that the collection of all moments of a compactly supported density
completely determine that density (modulo modfications on sets of zero measure),
and gives us the justification to base the analysis of a curve in $\R^2$ entirely on its signature.

\section{Main results}
\label{sec:rotationinvariants}
The aim of this work is to find elements in $T(\R^2)$ that are invariant under rotations of $X$.
To be specific, for $\theta \in \R$, let 
\begin{align*}
  \bar X^\theta :=
  R(\theta) X :=
  \left(
  \begin{matrix}
    \cos( \theta ) & \sin( \theta ) \\
    -\sin( \theta ) & \cos( \theta )
  \end{matrix}
  \right) X,
\end{align*}
and compute its signature $S(\bar X^\theta)_{0,T}$.
We are then interested in $\phi \in T(\R^2)$ that satisfy for all $\theta \in \R$ and all curves $X$
\begin{align*}
  \langle S(X)_{0,T}, \phi \rangle = \langle S(\bar X^\theta)_{0,T}, \phi \rangle.
\end{align*}

\begin{definition}
  Denote such an element $\phi \in T(\R^2)$ as \textit{rotation invariant}.
\end{definition}

We are going to devise a method by which we will derive \textit{all} rotation invariants.
Our approach is inspired by \cite{bibFlusser2000}, where, in the setting of rotation invariant moments,
it is shown that is useful to work in the complex plane.
So from now on we allow elements in $\TC$ and $\TS$ to have complex coefficients,
i.e we work with $\TCc$ and $\TSc$.
The proofs of this section can be found in the appendix.
\begin{theorem}
  \label{thm:invariants}

  Let $n\ge 2$ and $i_1, \dots, i_{n} \in \{1,2\}$ be such that
  \begin{align}
    \label{eq:equalPlusMinus}
    \#\{ k : i_k = 1 \} = \#\{ k : i_k = 2 \}.
  \end{align}

  Then $\phi := c_{i_1 \dots i_n }$ is rotation invariant, where
  \begin{align*}
    c_{i_1 \dots i_n } &:= z_{i_1} \cdot z_{i_2} \cdot \ldots \cdot z_{i_n} \\
    z_1 &:= x_1 + i x_2 \\
    z_2 &:= x_1 - i x_2.
  \end{align*}
\end{theorem}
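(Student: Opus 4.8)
The plan is to reduce the asserted rotation invariance to a short eigenvalue computation in the complexified algebra $\TSc$, using the behaviour of the signature under a linear change of the curve. For a $2\times 2$ matrix $A$ (possibly complex) one has $d(AX)^i = \sum_j A_{ij}\, dX^j$, so expanding each iterated integral by multilinearity gives, for every word $x_{i_1}\cdots x_{i_n}$,
\[
  \langle S(AX)_{0,T},\, x_{i_1}\cdots x_{i_n}\rangle \;=\; \sum_{j_1,\dots,j_n} A_{i_1 j_1}\cdots A_{i_n j_n}\,\langle S(X)_{0,T},\, x_{j_1}\cdots x_{j_n}\rangle .
\]
I would then let $\Phi_A$ be the algebra endomorphism of $\TSc$ determined on the generators by $\Phi_A(x_i) := \sum_j A_{ij}\, x_j$ and extended multiplicatively --- this is well defined precisely because the polynomial algebra in the non-commuting variables $x_1, x_2$ is free on them. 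With this notation the identity above reads $\langle S(AX)_{0,T}, w\rangle = \langle S(X)_{0,T}, \Phi_A(w)\rangle$ for every word $w$, and by $\C$-linearity
\[
  \langle S(AX)_{0,T},\, \psi\rangle \;=\; \langle S(X)_{0,T},\, \Phi_A(\psi)\rangle \qquad \text{for all } \psi \in \TSc .
\]
Consequently it is enough to show $\Phi_{R(\theta)}(c_{i_1\dots i_n}) = c_{i_1\dots i_n}$ for all $\theta$; for this direction no converse statement (linear independence of iterated integrals as functionals on curves) is required.

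The reason for working with $z_1, z_2$ is that they diagonalize $\Phi_{R(\theta)}$. Reading off the entries of $R(\theta)$ gives $\Phi_{R(\theta)}(x_1) = \cos\theta\, x_1 + \sin\theta\, x_2$ and $\Phi_{R(\theta)}(x_2) = -\sin\theta\, x_1 + \cos\theta\, x_2$, from which a one-line computation yields
\[
  \Phi_{R(\theta)}(z_1) = e^{-i\theta}\, z_1, \qquad \Phi_{R(\theta)}(z_2) = e^{i\theta}\, z_2 .
\]
Since $\Phi_{R(\theta)}$ is multiplicative and each $c_{i_1\dots i_n}$ is an ordered product of the $z_{i_k}$,
\[
  \Phi_{R(\theta)}(c_{i_1\dots i_n}) \;=\; \prod_{k=1}^n \Phi_{R(\theta)}(z_{i_k}) \;=\; e^{\,i\theta\,(\#\{k : i_k = 2\} - \#\{k : i_k = 1\})}\, c_{i_1\dots i_n}.
\]
Under hypothesis \eqref{eq:equalPlusMinus} the exponent vanishes, so $c_{i_1\dots i_n}$ is fixed by every $\Phi_{R(\theta)}$, and then the identity of the first paragraph with $A = R(\theta)$ and $\psi = c_{i_1\dots i_n}$ gives $\langle S(\bar X^\theta)_{0,T}, c_{i_1\dots i_n}\rangle = \langle S(X)_{0,T}, c_{i_1\dots i_n}\rangle$, which is the claim.

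I expect the only genuinely delicate point to be the first step: getting the duality bookkeeping right --- in particular whether $R(\theta)$ or its transpose acts, and on which side of the pairing --- and verifying carefully that a prescription on the two generators extends to a well-defined endomorphism of the \emph{non-commutative} polynomial algebra, so that no reordering of factors is silently used. Everything after that is the two-line eigenvalue computation for $z_1, z_2$ together with the trivial count of how often each $z_i$ appears in the word $i_1\dots i_n$.
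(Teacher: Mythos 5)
Your proof is correct and is essentially the paper's argument in dual form: the paper diagonalizes the rotation by passing to the complexified path $Z = (X^1 + i X^2,\, X^1 - i X^2)$ and comparing signatures, whereas you diagonalize the induced endomorphism $\Phi_{R(\theta)}$ on words, with $z_1, z_2$ as eigenvectors of eigenvalue $e^{\mp i\theta}$ --- both rest on the same multilinearity identity for signatures under linear maps and yield the same phase factor $e^{i\theta(\#\{k : i_k = 2\} - \#\{k : i_k = 1\})}$. No gaps; your eigenvalue computation and the duality bookkeeping are both right.
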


The just described method suffices to find \textit{all} rotation invariants:
\begin{theorem}
  \label{thm:completeness}
  Let $\phi \in T(\R^2)$ be rotation invariant.
  Then we can write $\phi$ as the finite sum of
  the invariants given in Theorem \ref{thm:invariants}.
\end{theorem}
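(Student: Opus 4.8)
The plan is to diagonalize the rotation action on the tensor algebra and show that rotation invariants are exactly the span of weight-zero monomials in the $z_i$ basis. First I would record how $R(\theta)$ acts on the signature. Since $S$ is a homomorphism intertwining linear maps on the path with the induced algebra map, we have $S(\bar X^\theta)_{0,T} = R(\theta)^{\otimes} S(X)_{0,T}$, where $R(\theta)^{\otimes}$ is the algebra automorphism of $\TCc$ sending $x_j$ to $\sum_k R(\theta)_{jk} x_k$. Dually, $\langle S(\bar X^\theta)_{0,T}, \phi\rangle = \langle S(X)_{0,T}, R(\theta)^{*}\phi\rangle$, where $R(\theta)^{*}$ is the transpose action on polynomials. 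Because the signatures of curves span a dense-enough set — concretely, the coordinate iterated integrals are linearly independent as functionals on curves (one can see this via the shuffle product and by plugging in piecewise-linear paths, or cite the injectivity-type results referenced after Chen) — the invariance condition $\langle S(X), \phi\rangle = \langle S(X), R(\theta)^*\phi\rangle$ for all $X$ forces $R(\theta)^*\phi = \phi$ for all $\theta$. So rotation invariants are precisely the fixed points of the $SO(2)$-action on $\TSc$ (intersected with the real subspace $\TS$, a point I would address at the end).

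Next I would diagonalize. Change basis from $x_1, x_2$ to $z_1 = x_1 + i x_2$, $z_2 = x_1 - i x_2$. A direct computation shows $R(\theta)$ acts diagonally: $z_1 \mapsto e^{i\theta} z_1$ and $z_2 \mapsto e^{-i\theta} z_2$ (up to the sign convention fixed by the matrix in the statement — I would just verify this once). Hence on a monomial $z_{i_1}\cdots z_{i_n}$ the action is multiplication by $e^{i\theta(\#\{k: i_k=1\} - \#\{k: i_k=2\})}$. Define the \emph{weight} of such a monomial to be that integer exponent. The monomials $z_{i_1}\cdots z_{i_n}$ form a basis of $\TSc$ (they are obtained from the monomial basis in the $x_j$ by an invertible linear change of variables in each tensor level), and each is an eigenvector. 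Therefore the fixed-point space of the $SO(2)$-action is exactly the span of the weight-zero monomials, i.e. those with $\#\{k:i_k=1\} = \#\{k:i_k=2\}$: given an invariant $\phi$, expand it in this eigenbasis, apply $R(\theta)^*$, and use that the functions $\theta \mapsto e^{im\theta}$ for distinct integers $m$ are linearly independent to conclude all components of nonzero weight vanish.

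Finally, combining: any rotation invariant $\phi \in \TSc$ is a finite linear combination of the monomials $c_{i_1\cdots i_n}$ satisfying \eqref{eq:equalPlusMinus}, which are exactly the invariants produced by Theorem \ref{thm:invariants}. For $\phi \in \TS$ with real coefficients this is still a \emph{complex}-linear combination a priori, but one finishes by noting the weight-zero monomial basis can be reorganized into a real basis (pair $c_{i_1\cdots i_n}$ with its complex conjugate, whose indices are obtained by swapping $1\leftrightarrow 2$; real and imaginary parts give real invariants), or simply by observing that Theorem \ref{thm:completeness} as stated allows the sum to range over the invariants of Theorem \ref{thm:invariants} which already live in $\TSc$. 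I expect the main obstacle to be the first step: making precise the claim that linear independence of iterated-integral functionals lets one pass from "$\langle S(X),\phi\rangle$ is rotation invariant for every curve $X$" to the purely algebraic statement "$R(\theta)^*\phi = \phi$." Everything after that is linear algebra plus the standard fact about distinctness of characters of $SO(2)$.
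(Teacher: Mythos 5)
Your proposal is correct and follows essentially the same route as the paper: diagonalize the rotation action in the complex basis $z_1, z_2$ (so that the $c_{i_1\dots i_n}$ form an eigenbasis, the paper's Lemma \ref{lem:span}), use that signatures --- concretely of piecewise-linear curves --- span enough of each graded level (the paper's Lemma \ref{lem:GspansT}) to convert invariance against all curves into an algebraic statement, and then kill the nonzero-weight components by letting $\theta$ vary. The only organizational difference is that the paper separates homogeneous levels via a dilation argument (Lemma \ref{lem:projectionIsInvariant}) and argues coefficientwise, whereas you package the same content as the fixed-point condition $R(\theta)^*\phi=\phi$; the ``main obstacle'' you flag is exactly what Lemma \ref{lem:GspansT} supplies.
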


\section{Explicit construction up to order six}
\label{sec:explicit}

Before giving explicit expressions for invariants up to order six we have to deal with
one particularity of the signature.
Namely, the fact that every polynomial in elements of the signature is
actually a linear function of (different) elements of the signature.

\subsection{Shuffle identity}
The shuffle product $\shuffle$ on $\TSc$ is commutative and extended by bilinearity from its definition on monomials.
The shuffle product of two monomials
consists in the sum of all possible ways of interleaving the two monomials
while keeping their respective order.
For example
\begin{align*}
  x_1 \shuffle x_2 &= x_1 x_2 + x_2 x_1 \\
  x_1 x_2 \shuffle x_1 x_2
  &=  
  x_1 x_2 x_1 x_2
  +
  x_1 x_1 x_2 x_2
  +
  x_1 x_1 x_2 x_2
  +
  x_1 x_1 x_2 x_2
  +
  x_1 x_1 x_2 x_2
  +
  x_1 x_2 x_1 x_2 \\
  &=
  2 x_1 x_2 x_1 x_2
  +
  4 x_1 x_1 x_2 x_2.
\end{align*}
See \cite{bibReutenauer1993} for a completely rigorous definition.
The significance of this product stems from the following \textit{shuffle identity}, which is proven for example in \cite{bibRee1958}.
\begin{lemma}
  \label{lem:shuffle}
  Let $X: [0,T] \to \R^2$ be a path of bounded variation.
  Then for every $a,b \in \TSc$ we have
  \begin{align*}
    \langle S(X)_{0,T}, a \rangle 
    \langle S(X)_{0,T}, b \rangle
    =
    \langle S(X)_{0,T}, a \shuffle b \rangle 
  \end{align*}
\end{lemma}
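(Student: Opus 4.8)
The plan is to reduce to monomials and then run an induction on total degree that mirrors the recursive structure of the shuffle product, using the Riemann--Stieltjes integration-by-parts formula as the only analytic input.

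Since $b \mapsto \langle S(X)_{0,T}, b\rangle$ is linear and $\shuffle$ is bilinear, both sides of the claimed identity are bilinear in $(a,b)$, so it is enough to treat the case where $a = x_{i_1}\cdots x_{i_m}$ and $b = x_{j_1}\cdots x_{j_n}$ are words. For a word $w = x_{k_1}\cdots x_{k_\ell}$ I would set $F_w(t) := \int_{0<r_1<\cdots<r_\ell<t} dX^{k_1}_{r_1}\cdots dX^{k_\ell}_{r_\ell}$, with $F_\emptyset \equiv 1$ for the empty word, so that $F_w(T) = \langle S(X)_{0,T}, w\rangle$. Because $X$ is continuous of bounded variation, so is each $F_w$, and by construction $dF_w(t) = F_{w'}(t)\, dX^{k_\ell}_t$, where $w'$ denotes $w$ with its last letter deleted, while $F_w(0) = 0$ for $w \neq \emptyset$.

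The induction is on $m+n$. If $m=0$ or $n=0$ the identity is trivial, since $F_\emptyset \equiv 1$ and $\emptyset \shuffle b = b$. For the step, write $a = u\, x_p$ and $b = v\, x_q$ and recall the standard recursion for the shuffle product,
\begin{align*}
  (u\, x_p) \shuffle (v\, x_q) = \big( (u\, x_p) \shuffle v \big) x_q + \big( u \shuffle (v\, x_q) \big) x_p,
\end{align*}
which is immediate from the interleaving description, the last letter of every term being either $x_q$ or $x_p$. Applying integration by parts for Riemann--Stieltjes integrals (Theorem 6.22 in \cite{bibRudin1976}) to the bounded variation functions $F_a, F_b$, together with $F_a(0)F_b(0)=0$, gives
\begin{align*}
  \langle S(X)_{0,T}, a\rangle \langle S(X)_{0,T}, b\rangle &= F_a(T) F_b(T) = \int_0^T F_a(t)\, dF_b(t) + \int_0^T F_b(t)\, dF_a(t) \\
  &= \int_0^T F_a(t) F_v(t)\, dX^q_t + \int_0^T F_b(t) F_u(t)\, dX^p_t.
\end{align*}
By the induction hypothesis $F_a(t) F_v(t) = F_{a \shuffle v}(t)$ and $F_b(t) F_u(t) = F_{u \shuffle b}(t)$ for every $t$; integrating the first against $dX^q$ and the second against $dX^p$ appends the letters $x_q$ and $x_p$ respectively, so the right-hand side equals $F_{(a\shuffle v)x_q}(T) + F_{(u\shuffle b)x_p}(T)$, which by the shuffle recursion is $F_{a\shuffle b}(T) = \langle S(X)_{0,T}, a\shuffle b\rangle$. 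This closes the induction.

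The only step that needs genuine care is the analytic one: that $d(F_a F_b) = F_a\, dF_b + F_b\, dF_a$ and $dF_w = F_{w'}\, dX^{k_\ell}$ are valid Riemann--Stieltjes identities. Both follow from continuity and bounded variation of $X$, hence of every $F_w$, so no stronger integration theory is required; the same computation would go through for any curve admitting a product-rule-compatible integration (e.g.\ Stratonovich). Alternatively one can bypass the induction and argue geometrically: $F_a(T)F_b(T)$ is the integral of $dX^{i_1}_{r_1}\cdots dX^{i_m}_{r_m}\, dX^{j_1}_{s_1}\cdots dX^{j_n}_{s_n}$ over the product of an $m$-simplex in the variables $r_1<\cdots<r_m$ and an $n$-simplex in $s_1<\cdots<s_n$; decomposing this product into the full-dimensional sub-simplices cut out by the possible relative orderings of the $r$'s and $s$'s, the lower-dimensional diagonals $\{r_a = s_b\}$ contributing nothing, reproduces exactly the terms of $a \shuffle b$. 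I would expect the bilinear reduction, the base case, and this measure-zero bookkeeping to be the places where a careless write-up could go wrong.
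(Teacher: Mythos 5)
Your proof is correct; note, however, that the paper does not prove this lemma at all --- it defers to \cite{bibRee1958} --- so what you have written is a self-contained argument rather than a variant of an in-paper proof. Your route (bilinear reduction to words, then induction on the total word length using the Riemann--Stieltjes product rule together with the recursion $(u\,x_p)\shuffle(v\,x_q)=\big((u\,x_p)\shuffle v\big)x_q+\big(u\shuffle(v\,x_q)\big)x_p$) is the standard direct analytic proof for bounded variation paths, essentially the one found in \cite{bibFV}, and it is sound. The two analytic facts you isolate are indeed the only ones needed, and both hold here: for continuous functions of bounded variation, integration by parts (Theorem 6.22 of \cite{bibRudin1976}) holds with no correction term, and the substitution $\int_0^T G\,dF_w=\int_0^T G\,F_{w'}\,dX^{k_\ell}$ is legitimate because $F_w$ is the indefinite Riemann--Stieltjes integral of a continuous, bounded integrand against a continuous BV integrator, so $F_w$ is itself continuous and of bounded variation. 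One point to make explicit in a final write-up: the statement you induct on must be the identity on $[0,t]$ for every $t\le T$ (equivalently, the lemma with $T$ replaced by $t$), since the step uses $F_a(t)F_v(t)=F_{a\shuffle v}(t)$ pointwise in $t$; you do phrase the hypothesis this way, so the induction closes. Your alternative Fubini-type argument (decomposing the product of simplices according to the interleavings of the $r$'s and $s$'s) is also valid, the neglect of the diagonals $\{r_a=s_b\}$ being justified because the Lebesgue--Stieltjes measures of the continuous components $X^1,X^2$ are atomless. Compared with the paper's citation to Ree's algebraic treatment, your proof buys a short, elementary, and self-contained argument whose only input is classical Riemann--Stieltjes calculus, and which transfers verbatim to any integration theory satisfying the product rule (e.g.\ Stratonovich), exactly as you remark.
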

\begin{remark}
  The shuffle identity represents, on an algebraic level, the fact
  that integrals obey the integration by parts rule.
  For example, taking $a = x_1, b = x_2$, we get
  \begin{align*}
    \left( \int_0^T dX^1_r \right) \cdot \left( \int_0^T dX^2_r \right)
    =
    \int_0^T \int_0^r dX^1_u dX^2_r
    +
    \int_0^T \int_0^r dX^2_u dX^1_r.
  \end{align*}
\end{remark}

\subsection{Computation to order six}
From Theorem \ref{thm:invariants} it is clear that rotation invariants only exist on levels of even order.
On the level of order two, the theorem gives the (complex) invariants
\begin{align*}
  c_{12} &= x_1 x_1 - i x_1 x_2 + i x_2 x_1 + x_2 x_2 \\ 
  c_{21} &= x_1 x_1 + i x_1 x_2 - i x_2 x_1 + x_2 x_2.
\end{align*}
We are interested in real invariants, so we take real and imaginary part
\begin{align*}
  \Re c_{12} &= x_1 x_1 + x_2 x_2 \\ 
  \Im c_{12} &= - x_1 x_2 + x_2 x_1\\ 
  \Re c_{21} &= x_1 x_1 + x_2 x_2\\
  \Im c_{21} &= x_1 x_2 - x_2 x_1.
\end{align*}
A linear basis for them is
\begin{align*}
  I_1 &:= x_1 x_1 + x_2 x_2 \\
  I_2 &:= x_1 x_2 - x_2 x_1,
\end{align*}
which we recognize (modulo a prefactor of $1/2$) as the geometric invariants we saw at the beginning of Section \ref{sec:signature}.
It is unclear whether there exist geometrical interpretations for higher order invariants.

Proceeding to order four, by the shuffle identity (Lemma \ref{lem:shuffle} below),
the information given by $I_1 \shuffle I_1, I_2 \shuffle I_2$ and $I_1 \shuffle I_2$ is already known
from the invariants of order two.
So, we start by taking a basis for $\spann_{\R}\{ I_1 \shuffle I_1, I_2 \shuffle I_2, I_1 \shuffle I_2 \}$.
We then extend it to a basis
for $\spann_{\R}\{ I_1 \shuffle I_1, I_2 \shuffle I_2, I_1 \shuffle I_2, \Re \phi_i, \Im \phi_i \}$,
where the $\phi_i$ range over the invariants of order four given by Theorem \ref{thm:invariants}
(i.e. $c_{1122}, c_{1212}, c_{1221}, c_{2111}, c_{2121}$ and $c_{2211}$).
The additional basis vectors are given by
\begin{align*}
  I_3 &:= 1122 - 1212 - 1221 - 2121 \\
  I_4 &:= - 1212 + 2112 + 2211 \\
  I_5 &:= 1212 - 2112 - 2211.
\end{align*}
where, to shorten notation, we denote $i_1 \dots i_n := x_{i_1} \cdot \ldots \cdot x_{i_n}$.

Repeating the same procedure for order six we get
\begin{align*}
  I_6 &:= 111222 - 112122 - 112212 - 112221 - 121212 - 121221 - 122121 \\
    &\qquad - 211122 - 211212 - 211221 - 212121 - 221112 - 221121 - 222111 \\
  I_7 &:=  - 112122 - 112221 - 121122 - 121221 - 122112 - 122211 - 211122 \\
    &\qquad - 211221 - 212112 - 212211 - 221112 - 221211 - 222111 \\
  I_8 &:= 111222 - 112122 - 112212 - 112221 - 121212 - 121221 - 122121 \\
    &\qquad - 211122 - 211212 - 211221 - 212121 - 221112 - 221121 - 222111 \\
  I_9 &:=  - 112122 + 112221 - 121122 + 121221 - 122112 + 122211 - 211122 \\
    &\qquad + 211221 - 212112 + 212211 - 221112 + 221211 - 222111 \\
  I_{10} &:= 111222 + 112122 + 112212 + 112221 + 121212 + 121221 + 122121 \\
    &\qquad + 211122 + 211212 + 211221 + 212121 + 221112 + 221121 + 222111 \\
  I_{11} &:=  - 112122 - 112221 + 121122 - 121221 + 122112 - 122211 + 211122 \\
    &\qquad - 211221 + 212112 - 212211 + 221112 - 221211 + 222111 \\
  I_{12} &:= 111222 + 112122 + 112212 + 112221 + 121212 + 121221 + 122121 \\
    &\qquad + 211122 + 211212 + 211221 + 212121 + 221112 + 221121 + 222111 \\
  I_{13} &:= 112122 - 112221 - 121122 - 121221 - 122112 - 122211 - 211122 \\
    &\qquad - 211221 - 212112 - 212211 - 221112 - 221211 - 222111 \\
  I_{14} &:= 111222 + 112122 + 112212 + 112221 + 121212 + 121221 + 122121 \\
    &\qquad + 211122 + 211212 + 211221 + 212121 + 221112 + 221121 + 222111 \\
  I_{15} &:= 112122 - 112221 - 121122 - 121221 - 122112 - 122211 - 211122 \\
    &\qquad - 211221 - 212112 - 212211 - 221112 - 221211 - 222111.
\end{align*}

\section{Application to character recognition}
\label{sec:application}

We present the application to a simple classification problem
on the dataset \textit{pendigits} \cite{bibAlimoglu1996}, which consists of handwritten
digits by $44$ writers, $30$ of which are used for training ($7493$ samples) and $14$ of which are used for testing ($3497$ samples).
Input was recorded on a tablet device and hence \textit{stroke} data given.
Since some inputs consist of multiple strokes (the pen is lifted from the device and then touches again at a different location),
we have to convert the signal into a continuous curve first.
We achieve this by just connecting endpoint of the previous and startingpoint of the next stroke by a straight line.
To verify that our method is indeed rotation invariant, we rotated each digit in the test-set by a random angle.

We used support vector machines with linear and rbf kernel, where for the latter
we optimized the parameters on the training set via $3$-fold crossvalidation.
The results are presented in Table \ref{tab:results}.

\begin{table}
  \center
  \begin{threeparttable}
    \caption{Pendigits classification error}
    \label{tab:results}
    \begin{tabular}{|l||r|r|r|r|r|}
    \hline
    ~      & Order 2 (2) & Order 4 (5) & Order 4 (8) & Order 6 (15) & Order 6 (28) \\ \hline
    linear & 38.34                & 15.81                & 15,58                & 9.98                  & 7.83                  \\ \hline
    rbf    & 37.60                & 9.27                 & 8.18                 & 4.91                  & 4.60                  \\ \hline
    \end{tabular}\par
    \begin{tablenotes}
      \scriptsize
      \item 
  Classification error using features of different order.
  The numbers in brackets denote the number of features.
  For order $4$ and $6$ two different sets of features were used.
  The lower number was obtained by making the features linearily and algebraically independent as described in Section \ref{sec:explicit}.
  The higher number was obtained using all the invariants given by Theorem \ref{thm:invariants}.
    \end{tablenotes}
  \end{threeparttable}
\end{table}

\section{Appendix}
\label{sec:appendix}
The proofs of Theorem \ref{thm:invariants} and Theorem \ref{thm:completeness} follow.
We shall need, for $n\ge 0$, the projection operator $\pi_n$ which
sets all coefficients of a polynomial or formal series to zero,
except the ones belonging to monomials of order $n$, which it leaves unchanged; for example
\begin{align*}
  \pi_2( 1 + x_1 x_2 + x_2 x_1 + x_1^3 x_2 ) = x_1 x_2 + x_2 x_1.
\end{align*}

\begin{proof}[Proof of Theorem \ref{thm:invariants}]
  Let $Z = Z(X)$ be defined as
  \begin{align*}
    Z^1_r &:= X^1_r + i X^2_r \\
    Z^2_r &:= X^1_r - i X^2_r.
  \end{align*}

  Let $\bar Z^\theta = Z(\bar X^\theta)$.
  Then $\bar Z^{\theta, 1} = e^{-i\theta} Z^1$, $\bar Z^{\theta, 2} = e^{i \theta} Z^2$,
  and hence for $i_1, \dots, i_n \in \{1,2\}$ we have
  \begin{align*}
    \langle S( \bar Z^\theta )_{0,T}, x_{i_1} \cdot \ldots \cdot x_{i_n} \rangle
    &=
    \int_0^T \int_0^{r_n} \dots \int_0^{r_2} d\bar Z^{\theta, i_1}_{r_1} \dots d\bar Z^{\theta,i_n}_{r_n} \\
    &= 
    \int_0^T \int_0^{r_n} \dots \int_0^{r_2} e^{ (-1)^{i_1} i \theta} dZ^{i_1}_{r_1} \dots e^{ (-1)^{i_n} i \theta} dZ^{i_n}_{r_n} \\
    &=
    e^{ i \theta \left( \#\{ k : i_k = 2 \} - \#\{ k : i_k = 1 \} \right) }
    \langle S( Z )_{0,T}, x_{i_1} \cdot \ldots \cdot x_{i_n} \rangle.
  \end{align*}

  Now, on the other hand,
  \begin{align*}
    \langle S( Z ) )_{0,T}, x_{i_1} \cdot \dots x_{i_n} \rangle
    &=
    \langle S( X ) )_{0,T}, c_{i_1 \dots i_n} \rangle \\
    \langle S( \bar Z^\theta ) )_{0,T}, x_{i_1} \cdot \dots x_{i_n} \rangle
    &=
    \langle S( \bar X ) )_{0,T}, c_{i_1 \dots i_n} \rangle,
  \end{align*}
  which shows that $c_{i_1 \dots i_n}$ is rotation invariant if it satisfies \eqref{eq:equalPlusMinus}.
\end{proof}

\begin{proof}[Proof of Theorem \ref{thm:completeness}]
  Let $n\ge 1$ and let $\phi_n := \pi_n \phi$ be the projection on the $n$-th level.
  Since the invariants given in Theorem \ref{thm:invariants} are homogeneous, it is enought
  to show the statement for $\phi_n$.

  By Lemma \ref{lem:projectionIsInvariant} we have that $\phi_n$ is also rotation invariant
  and by Lemma \ref{lem:span} we can write $\phi_n$ as
  \begin{align*}
    \phi_n
    =
    \sum_{i_1, \dots, i_n \in \{1,2\}} a_{i_1, \dots, i_n} c_{i_1, \dots, i_n},
  \end{align*}
  for some uniquely determined $a_{i_1, \dots, i_n} \in \C$.
  Then
  \begin{align*}
    \langle S(\bar X^\theta)_{0,T}, \phi_n \rangle
    &=
    \sum_{i_1, \dots, i_n \in \{1,2\}} a_{i_1, \dots, i_n} \langle S(\bar X^\theta)_{0,T}, c_{i_1, \dots, i_n} \rangle \\
    &=
    \sum_{i_1, \dots, i_n \in \{1,2\}} a_{i_1, \dots, i_n}
    e^{ i \theta \left( \#\{ k : i_k = 2 \} - \#\{ k : i_k = 1 \} \right) }
    \langle S(X)_{0,T}, c_{i_1, \dots, i_n} \rangle.
  \end{align*}
  On the other hand, since $\phi_n$ is rotation invariant, we have
  \begin{align*}
    \langle S(\bar X^\theta)_{0,T}, \phi_n \rangle
    =
    \langle S(X)_{0,T}, \phi_n \rangle
    =
    \sum_{i_1, \dots, i_n \in \{1,2\}} a_{i_1, \dots, i_n} \langle S(X)_{0,T}, c_{i_1, \dots, i_n} \rangle.
  \end{align*}
  Combining, we arrive at
  \begin{align}
    \label{eq:zero}
    \sum_{i_1, \dots, i_n \in \{1,2\}}
    \left(
      a_{i_1, \dots, i_n}
      -
      e^{  i \theta \left( \#\{ k : i_k = 2 \} - \#\{ k : i_k = 1 \} \right) }
      a_{i_1, \dots, i_n} \right) \langle S(X)_{0,T}, c_{i_1, \dots, i_n} \rangle
    =
    0,
  \end{align}
  for all curves $X$ of bounded variation.

  We need to show: $a_{i_1, \dots, i_n} = 0\ $ if
  $\ \#\{ k : i_k = 2 \} - \#\{ k : i_k = 1 \} \not= 0$.
  By \eqref{eq:zero} this follows if we can show the existence of paths $X^{(1)}, \dots, X^{(2^{n})}$ such that the vectors, defined as
  \begin{align*}
    v_k :=
    ( \langle S(X^{(k)}_{0,T}, c_{1 1 \dots1 1 2} \rangle,  \langle S(X^{(k)}_{0,T}, c_{1 1 \dots 1 2 1} \rangle,  \dots, \langle S(X^{(k)}_{0,T}, c_{2 2 \dots 2 2 2} \rangle ),\quad k=1,\dots,2^n,
  \end{align*}
  are linearly independent. But this follows from Lemma \ref{lem:GspansT} and Lemma \ref{lem:span}.
\end{proof}

\begin{lemma}
  \label{lem:projectionIsInvariant}
  Let $\phi \in T(\R^2)$ be rotation invariant.
  Then $\phi_n := \pi_n( \phi )$ is rotation invariant for all $n\ge1$.
\end{lemma}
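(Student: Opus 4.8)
The plan is to exploit the fact that rotation acts on the signature in a graded way: rotating $X$ by $\theta$ multiplies each iterated integral by a fixed unimodular factor depending only on the multi-index, and in particular the operation $X \mapsto \bar X^\theta$ commutes with the level-$n$ truncation of the signature. First I would observe that for any curve $X$ and any $n$, the collection of integrals $\langle S(X)_{0,T}, x_{i_1}\cdots x_{i_n}\rangle$ with $(i_1,\dots,i_n)\in\{1,2\}^n$ is exactly the datum extracted by pairing $S(X)_{0,T}$ against level-$n$ polynomials, i.e.\ $\langle S(X)_{0,T}, \psi\rangle = \langle S(X)_{0,T}, \pi_n\psi\rangle$ whenever $\psi$ is homogeneous of degree $n$. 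Consequently $\langle S(X)_{0,T}, \phi\rangle = \sum_{n\ge 0}\langle S(X)_{0,T}, \phi_n\rangle$, and the key point is that the $n$-th summand transforms under rotation by a scalar that is \emph{homogeneous of degree $n$} in $e^{i\theta}$ in the sense made precise in the proof of Theorem \ref{thm:invariants}.

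Next I would set up the separation-of-levels argument. Fix a curve $X$ and consider the dilated curves $cX$ for scalars $c>0$. Since each level-$n$ iterated integral scales like $c^n$ (the integrand $dX^{i_1}\cdots dX^{i_n}$ picks up $n$ factors of $c$), we get
\begin{align*}
  \langle S(cX)_{0,T}, \phi\rangle = \sum_{n\ge 0} c^n \langle S(X)_{0,T}, \phi_n\rangle.
\end{align*}
Applying the hypothesis that $\phi$ is rotation invariant to the curve $cX$ — whose rotation $\overline{cX}^\theta = c\bar X^\theta$ is again a dilated curve — yields
\begin{align*}
  \sum_{n\ge 0} c^n \langle S(X)_{0,T}, \phi_n\rangle = \sum_{n\ge 0} c^n \langle S(\bar X^\theta)_{0,T}, \phi_n\rangle
\end{align*}
for all $c>0$. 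Both sides are power series in $c$ (in fact only finitely many terms are nonzero, since $\phi \in T(\R^2)$ is a polynomial and $\phi_n = 0$ for $n$ large), so comparing coefficients of $c^n$ gives $\langle S(X)_{0,T}, \phi_n\rangle = \langle S(\bar X^\theta)_{0,T}, \phi_n\rangle$ for every $n$ and every $\theta$. Since $X$ was arbitrary, $\phi_n$ is rotation invariant.

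The only genuinely delicate step is justifying the coefficient comparison, i.e.\ ruling out cancellations between levels: this is why the dilation trick is worth carrying out, rather than trying to argue directly on a single curve. One could alternatively argue by testing against a rich enough family of curves and invoking a linear-independence statement like Lemma \ref{lem:GspansT}, but the scaling argument is cleaner because it never needs to know how many curves are required — it isolates $\phi_n$ purely algebraically. I expect writing the scaling of iterated integrals under $X \mapsto cX$ to be routine (it is immediate from the definition of $S$), so the main obstacle is really just being careful that $\phi$ has only finitely many nonzero homogeneous components, which holds by the standing assumption $\phi \in T(\R^2)$.
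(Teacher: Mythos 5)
Your proposal is correct and follows essentially the same route as the paper: the paper's own proof also dilates the curve ($X_\alpha := \alpha X$), uses that level-$n$ iterated integrals scale like $\alpha^n$, applies rotation invariance of $\phi$ to the dilated curve (using implicitly that rotation commutes with dilation, which you make explicit), and compares coefficients of the resulting polynomial identity in the dilation parameter. No gaps; your remark that $\phi$ has only finitely many homogeneous components is exactly the paper's standing assumption that $\phi \in T(\R^2)$ has finite order $N$.
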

\begin{proof}
  Let $N$ be the order of $\phi$.
  Let $X$ be some curve and
  for let the dilation by $\alpha \in \R$ be given as $X_\alpha := \alpha X$.
  Then
  \begin{align*}
    \sum_{n=1}^N \alpha^i \langle S(X)_{0,T}, \pi_n \phi \rangle
    &=
    \sum_{n=1}^N \langle S(X_\alpha)_{0,T}, \pi_n \phi \rangle \\
    &=
    \langle S(X_\alpha)_{0,T}, \phi \rangle \\
    &=
    \langle S(\bar X^\theta_\alpha)_{0,T}, \phi \rangle \\
    &=
    \sum_{n=1}^N \langle S(\bar X^\theta_\alpha)_{0,T}, \pi_n \phi \rangle \\
    &=
    \sum_{n=1}^N \alpha^i \langle S(\bar X^\theta)_{0,T}, \pi_n \phi \rangle.
  \end{align*}
  Since this holds for all $\alpha \in \R$ we have for all $n\ge 1$ and all curves $X$
  \begin{align*}
    \langle S(X)_{0,T}, \pi_n \phi \rangle
    =
    \langle S(\bar X^\theta)_{0,T}, \pi_n \phi \rangle.
  \end{align*}
  Hence $\pi_n \phi$ is rotation invariant for all $n\ge 1$.
\end{proof}

\begin{lemma}
  \label{lem:span}
  For every $n\ge1$
  \begin{align*}
    \{ c_{i_1, \dots, i_n} : i_1, \dots, i_n \in \{1,2\} \}
  \end{align*}
  is a basis for $\pi_n T( \R^2 )$.
\end{lemma}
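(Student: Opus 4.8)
The plan is to show that the set $\{ c_{i_1, \dots, i_n} : i_1, \dots, i_n \in \{1,2\} \}$ is a basis for $\pi_n T(\R^2)$ by a dimension count combined with a spanning (equivalently, linear independence) argument. First I would observe that $\pi_n T(\R^2)$ is the span of the $2^n$ monomials $x_{j_1} \cdot \ldots \cdot x_{j_n}$ with $j_1, \dots, j_n \in \{1,2\}$, so it has dimension $2^n$; since our candidate set also has exactly $2^n$ elements, it suffices to prove that the $c_{i_1, \dots, i_n}$ span $\pi_n T(\R^2)$, or equivalently that they are linearly independent.

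The key step is to exhibit the change-of-basis matrix explicitly and show it is invertible. Each $c_{i_1, \dots, i_n} = z_{i_1} \cdot \ldots \cdot z_{i_n}$ with $z_1 = x_1 + i x_2$ and $z_2 = x_1 - i x_2$, so expanding the non-commutative product gives $c_{i_1, \dots, i_n}$ as a linear combination of the monomials $x_{j_1} \cdot \ldots \cdot x_{j_n}$. Crucially, the transformation factors as a tensor (Kronecker) power: if $M = \begin{pmatrix} 1 & i \\ 1 & -i \end{pmatrix}$ is the $2 \times 2$ matrix expressing $(z_1, z_2)$ in terms of $(x_1, x_2)$, then the matrix expressing $(c_{i_1, \dots, i_n})$ in terms of the monomials $(x_{j_1} \cdot \ldots \cdot x_{j_n})$ is $M^{\otimes n}$. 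Since $\det M = -2i \ne 0$, we have $\det(M^{\otimes n}) = (\det M)^{n 2^{n-1}} = (-2i)^{n 2^{n-1}} \ne 0$, so $M^{\otimes n}$ is invertible, which gives both linear independence and spanning.

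The main obstacle — really the only point requiring care — is justifying the tensor-power factorization of the change-of-basis matrix, i.e. that substituting $x_j \mapsto \sum_k M_{jk}^{-1} z_k$ (or rather expanding each $z_{i}$ as $\sum_j M_{ij} x_j$) into a length-$n$ non-commutative product distributes coordinatewise over the $n$ slots. This is a routine consequence of bilinearity of the concatenation product in each argument, but one should state it cleanly: for a product of $n$ factors, each factor expands independently, and collecting terms indexed by $(j_1, \dots, j_n)$ yields the coefficient $\prod_{\ell=1}^n M_{i_\ell j_\ell}$, which is precisely the $((i_1, \dots, i_n), (j_1, \dots, j_n))$ entry of $M^{\otimes n}$ under the lexicographic identification of index tuples. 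Everything else is the determinant computation above.
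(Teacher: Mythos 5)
Your proposal is correct and is essentially the paper's own argument: the paper proves invertibility of the change-of-basis matrix by induction, exhibiting the block form $M^{n+1} = \left(\begin{smallmatrix} M^n & i M^n \\ M^n & -i M^n \end{smallmatrix}\right)$, which is exactly the Kronecker-power structure $M^{\otimes n}$ of the matrix $\left(\begin{smallmatrix} 1 & i \\ 1 & -i \end{smallmatrix}\right)$ that you make explicit and whose determinant you compute directly. The only difference is bookkeeping (inductive rank argument versus the determinant formula for tensor powers), so the two proofs coincide in substance.
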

\begin{proof}
  Denote by $v_{i_1, \dots, i_n}$ the vector in $\C^n$ corresponding to the coefficients
  of the monomials of $c_{i_1, \dots, i_n}$, $i_1, \dots, i_n \in \{1,2\}$, where we order the monomials lexicographically.
  Let $M^n$ be the $n\times n$ matrix constructed from the $v_{i_1, \dots, i_n}$ where
  we order the rows according to the lexicographically order of $i_1 \dots i_n$.
  We have to show that $M^n$ is of full rank.

  This is obviously true for $n=1$.
  Let be true for an arbitrary $n$.
  Then
  \begin{align*}
    M^{n+1}
    =
    \left(
      \begin{matrix}
        M^n & i M^n \\
        M^n & -i M^n
      \end{matrix}
    \right),
  \end{align*}
  from which we see that $M^{n+1}$ has full rank.
\end{proof}

\begin{lemma}
  \label{lem:GspansT}
  Let $n\ge 1$.
  Then
  \begin{align}
    \label{eq:GspansT}
    \spann_{\C}\{ \pi_n( S(X) _{0,1} ) : X \text{ continuous and of bounded variation } \}
    = \pi_n \TCc.
  \end{align}
\end{lemma}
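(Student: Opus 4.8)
I want to show that the degree-$n$ parts of signatures of bounded-variation curves span all of $\pi_n T((\C^2))$. Since $\pi_n T((\C^2))$ is finite-dimensional (dimension $2^n$), with basis the monomials $x_{i_1}\cdots x_{i_n}$, it suffices to produce, for each monomial $x_{j_1}\cdots x_{j_n}$, enough curves so that the linear functionals "$\phi \mapsto \langle \pi_n S(X)_{0,1}, x_{j_1}\cdots x_{j_n}\rangle$" separate points — equivalently, the set $\{\pi_n S(X)_{0,1}\}$ is not contained in any proper subspace. The cleanest route is to exhibit $2^n$ curves whose degree-$n$ signature components are linearly independent.

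The natural candidates are piecewise-linear curves built from the two coordinate directions. If $X$ is the concatenation of straight segments with direction vectors $e_{k_1}, \dots, e_{k_n}$ (one unit step in direction $e_{k_j}$, where $k_j \in \{1,2\}$), then a standard computation of iterated integrals of piecewise-linear paths gives
\begin{align*}
  \langle \pi_n S(X)_{0,1}, x_{i_1}\cdots x_{i_n}\rangle = \#\{\text{ order-preserving ways to realize } (i_1,\dots,i_n) \text{ as a subword of } (k_1,\dots,k_n)\},
\end{align*}
and in particular, when $X$ has exactly $n$ segments, the coefficient of $x_{k_1}\cdots x_{k_n}$ in $\pi_n S(X)_{0,1}$ equals $1$ (the unique choice), while the coefficient of $x_{i_1}\cdots x_{i_n}$ is zero unless $(i_1,\dots,i_n)$ "refines" $(k_1,\dots,k_n)$ in a suitable partial order. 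Concretely I would order the $2^n$ words $(k_1,\dots,k_n)$ so that the corresponding matrix of signature coefficients is triangular with $1$'s on the diagonal, hence invertible; this gives the $2^n$ linearly independent vectors and the result. An alternative, if one prefers a slicker argument: use the shuffle identity together with the fact that the level-one signature $(\int dX^1, \int dX^2)$ can be made an arbitrary vector in $\C^2$ (allowing complex-valued curves, or taking appropriate real linear combinations), and then invoke that products of level-one coordinates, shuffled, already generate a large subspace; but this does not immediately give all of $\pi_n$, only the symmetric part, so the concatenation/triangularity argument is the safe one.

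The remaining subtlety is that the statement quantifies over continuous curves of bounded variation, whereas my witnesses are piecewise linear — but piecewise-linear curves are of bounded variation, so they are admissible, and no approximation argument is needed. One should also check the normalization: the theorem is stated on $[0,1]$, but the iterated integral over $[0,1]$ of a piecewise-linear path depends only on the sequence of increments, not on the parametrization, so reparametrizing the $n$-segment path to $[0,1]$ changes nothing. Finally, one must be slightly careful that "$\spann_\C$" is taken over possibly complex curves, or else note that real piecewise-linear curves already suffice since the coefficient matrix above has integer (in particular real) entries and is invertible over $\R$; then complexifying gives the span over $\C$.

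**Main obstacle.** The one genuinely nontrivial ingredient is the explicit formula for the iterated integrals of a piecewise-linear concatenation and, in particular, pinning down the right partial order on words $(k_1,\dots,k_n)$ that makes the coefficient matrix triangular. This is a known combinatorial fact (it is essentially the statement that the signature of the $n$-step lattice path records subword counts), and once it is set up the linear independence is immediate from triangularity; but writing it carefully — including the base case $n=1$ and the inductive step mirroring the block structure $\left(\begin{smallmatrix} M^n & * \\ 0 & M^n\end{smallmatrix}\right)$ seen already in the proof of Lemma \ref{lem:span} — is where the real work lies. I expect the induction to split each $n$-step path as "first segment in direction $e_1$ or $e_2$, followed by an $(n-1)$-step path", matching the recursive structure of $\pi_n T((\C^2)) \cong \pi_{n-1}T((\C^2)) \oplus \pi_{n-1}T((\C^2))$.
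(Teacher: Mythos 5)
Your overall strategy (restrict to piecewise linear paths made of axis-parallel steps and show that finitely many of the resulting level-$n$ signature components already span) is in the same spirit as the paper, but the two combinatorial facts your argument rests on are false, so there is a genuine gap. First, the coefficient formula: the signature of the concatenation of unit steps $e_{k_1},\dots,e_{k_n}$ is $\exp(x_{k_1})\cdot\ldots\cdot\exp(x_{k_n})$, not $(1+x_{k_1})\cdot\ldots\cdot(1+x_{k_n})$, so the level-$n$ coefficient of $x_{i_1}\cdots x_{i_n}$ is a sum of reciprocal factorials over decompositions of the word $(i_1,\dots,i_n)$ into consecutive constant blocks matching $k_1,\dots,k_n$, not a subword count; letters of the test word can be absorbed several at a time by a single straight segment. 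In particular the ``diagonal'' entry is not $1$: for the two-step path $e_1,e_1$ the coefficient of $x_1x_1$ is $2$. Second, and more seriously, no ordering of the $2^n$ words makes the coefficient matrix triangular: for $n=3$, the path with steps $e_1,e_1,e_2$ has coefficient $1\neq 0$ on $x_1x_2x_2$, while the path with steps $e_1,e_2,e_2$ has coefficient $1\neq 0$ on $x_1x_1x_2$; since both off-diagonal entries in the pair $(112,122)$ are nonzero, the relation ``nonzero entry'' is not antisymmetric and triangularity under any single ordering of rows and columns is impossible. The unit-step matrix may well still be invertible (it is for $n\le 3$), but your proof of linear independence, which is ``immediate from triangularity'', does not go through, and this was exactly the nondegeneracy you identified as the main obstacle.

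The paper avoids this issue by giving each segment a variable length: for the path concatenating $t_1e_{i_1},\dots,t_ne_{i_n}$ the signature is the product of exponentials $\exp(t_nx_{i_n})\cdot\ldots\cdot\exp(t_1x_{i_1})$, and the mixed derivative $\tfrac{d}{dt_n}\cdots\tfrac{d}{dt_1}$ at $t=0$ is exactly the monomial $x_{i_n}\cdot\ldots\cdot x_{i_1}$; since the left-hand side of \eqref{eq:GspansT} is a finite-dimensional, hence closed, subspace, these derivatives (limits of linear combinations of signature projections) stay in the span, so every monomial lies in it. To repair your write-up you should either prove invertibility of your non-triangular matrix directly (which requires a new argument), or introduce the parameters $t_j$ and differentiate as above — the rest of your remarks (piecewise linear paths are admissible, parametrization on $[0,1]$ is irrelevant, real paths suffice for the complex span) are fine and carry over unchanged.
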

\begin{proof}
  It is clear by definition that the left hand side of \eqref{eq:GspansT} is included in $\pi_n \TCc$.
  We show the other direction and use ideas of Proposition 4 in \cite{bibCassFriz}. 
  Let $x_{i_n} \cdot \ldots \cdot x_{i_1} \in \pi_n \TCc$ be given.
  Let $X$ be the piecewise linear path, that results from the concatenation of the vectors $t_1 e_{i_1}, t_2 e_{i_2}$ up to $t_n e_{i_n}$,
  where $e_i, i=1,2$ is the standard basis of $\R^2$.
  Its signature is given by (see for example Chapter 6 in \cite{bibFV})
  \begin{align*}
    S(X)_{0,1} = \exp( {t_n x_{i_n}} ) \cdot \ldots \cdot \exp( t_1 v_{i_1} ) =: \phi(t_1, \dots, t_n),
  \end{align*}
  where the exponential function is defined by its power series.
  Then
  \begin{align*}
    \label{eq:derivative}
    \frac{d}{dt_n} \dots \frac{d}{dt_1} \phi(0,\dots,0) = x_{i_n} \cdot \ldots \cdot x_{i_1}.
  \end{align*}
  Combining this with the fact that left hand side of \eqref{eq:GspansT} is a closed set we get that
  \begin{align*}
    x_{i_n} \cdot \ldots \cdot v_{i_1}
    \in 
    \spann_{\C}\{ \pi_n( S(X) _{0,1} ) : X \text{ continuous and of bounded variation } \}.
  \end{align*}

  These elements span $\pi_n \TCc$, which finishes the proof.
\end{proof}

\end{document}